\theoremstyle{plain}
\newtheorem{theo}{Theorem}
\newcommand{\CI}{\mathrel{\perp\mspace{-10mu}\perp}}
\newcommand{\nCI}{\centernot{\CI}}
\tikzset{
  treenode/.style = {shape=rectangle, rounded corners,
                     draw, align=center,
                     top color=white, bottom color=blue!20},
  root/.style     = {treenode, font=\Large, bottom color=red!30},
  env/.style      = {treenode, font=\ttfamily\normalsize},
  dummy/.style    = {circle,draw}
}
\newdimen\nodeDist
\title{Latent Instrumental Variables as Priors in Causal Inference based on Independence of Cause and Mechanism}
\author{%
 Nataliya Sokolovska \\
NutriOmics,  Sorbonne University\\
Paris,  France\\
  \texttt{nataliya.sokolovska@sorbonne-universite.fr} \\
  \AND 
  Pierre-Henri Wuillemin\\
  LIP6, Sorbonne University \\
  Paris, France \\
  \texttt{pierre-henri.wuillemin@lip6.fr}
  }
\begin{document}

\maketitle

\begin{abstract}

Causal inference methods based on conditional independence construct Markov equivalent graphs, and cannot be applied to bivariate cases. The approaches based on independence of cause and mechanism state, on the contrary, that causal discovery can be inferred for two observations. In our contribution, we challenge to \textit{reconcile} these two research directions. We study the role of latent variables such as latent instrumental variables and hidden common causes in the causal graphical structures. We show that the methods based on the independence of cause and mechanism, \textit{indirectly contain traces of the existence of the hidden instrumental variables}. We derive a novel algorithm to infer causal relationships between two variables, and we validate the proposed method on simulated data and on a benchmark of cause-effect pairs. We illustrate by our experiments that the proposed approach is simple and extremely competitive in terms of empirical accuracy compared to the state-of-the-art methods.  
\end{abstract}

\section{Introduction}

Causal inference purely from non-temporal observational data is challenging. Instead of learning causal structure of an entire dataset, some researchers focus on analysis of causal relations between two variables only. The state-of-the-art conditional independence-based causal discovery methods (see, e.g., \cite{PearlCausality,PC}) construct graphs that are Markov equivalent, but these methods are not applicable in the case of two variables, since $X \rightarrow Y$ and $Y \rightarrow X$ are Markov equivalent.

The statistical and probabilistic causal inference methods based on assumptions of independence of cause and mechanism (see~\cite{Peters17} for general overview) appeared relatively recently, and achieve very reasonable empirical results. The main idea behind these methods is as follows: if a simple function that fits data exists, then it is likely that it also describes a causal relation in the data. 

\textit{The main goal of our paper is to try to reconcile these two modern viewpoints on causal inference}: the research direction initiated by~\cite{PC,PearlCausality} which is based on the assumption of conditional independencies, and the more recent research avenue where the main claim is that causal inference between two observations only is feasible~\cite{Scholkopf13,Parascandolo18,Daniusis10,Scholkopf12,SgouritsaCURE,Liu16} and whose theory relies on the  independence of cause and mechanism. 

To illustrate the intuition behind our approach, let us consider an example from~\cite{Peters17} with altitude and temperature, where $A$ is altitude, $T$ is temperature, $P(A)$ are cities locations, $P(T|A)$ is the physical mechanism of temperature given altitude, and it can be shown that changing the cities locations $P(A)$, the conditional probability $P(T|A)$  does not change. The postulate of independence of cause and mechanism allows to infer the causal direction $A \rightarrow T$. Any latent variables are ignored in this case. However, the cities locations depend on a country, since each country has different urban policy, population density, etc. Thus, in this example $P(A)$ has at least one latent variable which is county $C$. However, no matter what country is chosen, the physical mechanism $P(T|A)$ holds, and the true underlying causal structure is $C \rightarrow A \rightarrow T$. A country defines distribution of cities. Having two or more countries leads to a family of distributions. This \textit{mixture of probability distributions} is \textit{independent} from $P(T|A)$. Thus, this example also explains what is meant under the \textit{independence between probability distributions}.

Our contribution is multi-fold:
\begin{itemize}
\item Our main theoretical result is an alternative viewpoint on the recently appeared causal inference algorithms that are based on the independence of cause and mechanism\footnote{Here, we follow the simplification used by~\cite{Peters17}, however, we are aware that the independence of our interest is between the \textit{prior} of the cause and the mechanism.}. 
\item Assuming the existence of the hidden instrumental variables, we propose a novel method of causal inference. Since we consider a bivariate causal inference case where only $X$ and $Y$ are observed, we also propose an approach to estimate the latent instrumental variables for cases where the cluster assumption for $X$ and $Y$ holds.
\item We investigate the role of latent common causes, and we propose a method to identify them.
\item We validate our method on a synthetic data set on which we perform extensive numerical experiments, and on the Cause-Effect benchmark which is actively used by the causal inference community. 
\end{itemize}

The paper is organized as follows. Section~\ref{relatedWorks} discusses the state-of-the-art methods of bivariate causal inference. Preliminaries on the instrumental variables are provided in Section~\ref{sec:prelim}. We consider the role of the instrumental variables for causal inference, and we introduce our approach in Section~\ref{sec:ourmethod}.  
In Section~\ref{sec:exp}, we discuss the results of our numerical experiments on synthetic, as well as on standard challenges. Concluding remarks and perspectives close the paper.

\section{Related Work}
\label{relatedWorks}

In this section, we discuss the state-of-the-art methods of bivariate causal inference and the corresponding assumptions. 
In the current work, we focus on a family of causal inference methods which are based on a postulate telling that if $X \rightarrow Y$, then the marginal distribution $P(X)$ and the conditional distribution $P(Y|X)$ are independent \cite{Janzing10,Janzing12,SgouritsaCURE}. These approaches provide causal directions based on the estimated conditional and marginal distributions from observed non-temporal data.
Probably one of the oldest and well-studied models describing causal relations that are necessary to mention, are Structural Causal Models (SCM). A SCM where $X \rightarrow Y$ is defined as follows:
\begin{align}
X &= N_X, \hspace{0.5cm} Y = f_Y(X, N_Y),
\end{align} 
where $N_X$ and $N_Y$ are independent. Given $f_Y$ and the noise distributions $P_{N_Y}$ and $P_{N_X}$, we can sample data following a SCM.

A recently proposed but already often used \textit{Postulate of Independence of Cause and Mechanism} is formulated as follows (see, e.g., \cite{Janzing10,Janzing12,SgouritsaCURE}).  If $X$ causes $Y$, then $P(X)$ and $P(Y|X)$ estimated from observational data contain no information about each other. Looking for a parallel between the postulate and the SCM, we assume that in a SCM,  $f_Y$ and $P_{N_Y}$ contain no information about $P_X$ and visa versa. The postulate describes the independence of mechanisms, and tells that a causal direction can be inferred from estimated marginal and conditional probabilities of random variables from a data set. In the following, we investigate this research direction. 

It is not obvious how to formalise the independence of the marginal and conditional probabilities. A reasonable claim~\cite{Peters17} is that an optimal measure of dependence is the algorithmic mutual information that relies on the description length in the sense of Kolmogorov complexity. Since the exact computations are not feasible, there is a need for a practical and reliable approximation encoding that in a causal direction  $P(X)$ and $P(Y|X)$ would require more compact model classes than in an anticausal direction. 

Two families of methods of causal inference dealing with bivariate relations are often discussed. For a more general overview of causal structure learning see \cite{HeinzeDeml17,Peters17}. 
 Additive noise models (ANM) introduced by \cite{Hoyer09} and \cite{Peters14} is an attempt to describe causal relations between two variables. The ANM assume that if there is a function $f$ and some noise $E$ such that $Y = f(X) + E$, where $E$ and $X$ are independent, then the direction is inferred to $X \rightarrow Y$.  A generalised extension of the ANM, called post-nonlinear models, was introduced by \cite{Zhang09}. However, the known drawback of the ANM  
is that the model is not always suitable for inference on discrete tasks~\cite{Buhlmann14}. 

Another research avenue exploiting the asymmetry between cause and effect are the linear trace (LTr) method  \cite{Zscheischler11} and information-geometric causal inference (IGCI) \cite{Janzing12}. If the true model is $X \rightarrow Y$, and if $P(X)$ is independent from $P(Y|X)$, then the trace condition if fulfilled in the causal direction, and violated in the anticausal one. The IGCI method exploits the fact that the density of the cause and the log slope of the function-transforming cause to effect are uncorrelated. However, for the opposite direction, the density of the effect and the log slope of the inverse of the function are positively correlated. The trace condition is proved under the assumption that the covariance matrix is drawn from a rotation invariant prior~\cite{Janzing10}. The method was generalized for non-linear cases~\cite{Liu17}, and it was shown that the covariance matrix of the mean embedding of the cause in reproducing kernel Hilbert space is free independent with the covariance matrix of the conditional embedding of the effect given cause.
The case of high-dimensional variables is considered in \cite{Janzing2010} and \cite{Zscheischler11}, where the IGCI is considered. Here, the independence between probability distributions is based on the trace condition.  The identifiability via the trace condition is proved \cite{Janzing2010,Peters17} for deterministic relations, and no theory exists for noisy cases which are much more relevant for real-life applications.

Origo \cite{Budhathoki16} is a causal discovery method based on the Kolmogorov complexity. The Minimum Description Length (MDL) principle can be used to approximate the Kolmogorov complexity for real tasks. Namely, from algorithmic information viewpoint, if $X \rightarrow Y$, then the shortest program that computes $Y$ from $X$ will be more simple, or compact, than the shortest program computing $X$ from $Y$.  The obvious weakness of methods based on the Kolmogorov complexity, and also of Origo, is that the MDL only approximates Komolgorov complexity, and involves unknown metric errors that are difficult to control. The empirical performance is highly dependent on a dataset, and Origo was reported to reach the state-of-the-art performance on the multivariate benchmarks (Acute inflammation, ICDM abstracts, Adult data set), however, it performs less accurately than the ANM on the univariate benchmark of cause-effect pairs with known ground truth (the T\"ubingen data set)~\cite{Mooij16}. We also use this benchmark for our experiments. 

\section{Independence of Probability Distributions and Instrumental Variables}
\label{sec:prelim}

Let $X$ and $Y$ be two correlated variables. In the settings considered by~\cite{Peters17}, in order to decide whether $X\rightarrow Y$ or $Y\rightarrow X$, it is proposed to check if the distributions $P(X)$ and $P(Y|X)$ are independent. As far as we know, this independence between distributions (and not between random variables) does not have any formal definition. However, some useful properties can be derived, and various criteria were constructed for different cases~\cite{Scholkopf13,Parascandolo18,Daniusis10,Scholkopf12,SgouritsaCURE,Liu16}. In this paper, we adopt the following definition. Let $P(X,Y)$ be the joint distribution of $X,Y$ in a population $\cal P$; let $Q(X,Y)$ be the joint distribution of $X,Y$  in another population $\cal Q$. If $X$ is the cause of $Y$, the causal mechanism should be the same in the two distributions:
$$P(X,Y)=P(X)\cdot P(Y|X) \text{ and } Q(X,Y)=Q(X)\cdot P(Y|X),$$
i.e. $P(Y|X)=Q(Y|X)$, on the contrary, $P(X|Y)\neq Q(X|Y)$. More generally, for all mixed populations between $\cal P$ and $\cal Q$, and then, for all mixtures $Q_\lambda=\lambda P + (1-\lambda) Q$ with $\lambda \in [0,1]$, 
$$\forall \lambda \in [0,1], Q_\lambda(X) \CI Q_\lambda(Y|X) \iff Q_\lambda(Y|X)=P(Y|X).$$

Now we consider $\lambda$ as a hyper-parameter for a (latent) prior $I_X$ that allows to select the population ($P(X|I_X=0)=P(X)$, $P(X|I_X=1)=Q(X)$). In this meta-model, $I_X$ and $X$ are dependent, $X$ and $Y$ are dependent. But $I_X$ and $Y$ are independent conditionally to $X$. On the contrary, if we consider $\lambda$ as a hyper-parameter for a (latent) prior $I_Y$, that allows to select the population ($P(Y|I_Y=0)=P(Y)$, $P(Y|I_Y=1)=Q(Y)$). In this meta-model, $I_Y$ and $Y$ are dependent, $X$ and $Y$ are dependent. But since, $P(X|Y)\neq Q(X|Y)$, $I_Y$ and $X$ are not independent, even conditionally to $Y$.

To provide some intuition behind such a mixture model: let $P(X)$ and $Q(X)$ be the distributions of cities locations in two different countries, and $P(Y|X)$ be a physical mechanism predicting weather given location. Then $\lambda$ is the hyper-parameter controlling the proportion of observations in each country, and note that $\lambda$, $P(X)$, and also $Q(X)$ are independent from $P(Y|X)$. 

Such a representation of the problem as a mixture model with latent priors, motivates our proposition to use models with instrumental latent variables.

The aim of models with \textit{instrumental variables}~\cite{Wright28,Heckman97,Angrist01} where $X$, $Y$, and $I_X$ are observed, and $U$ is an unobserved confounder, is to identify the causal effect of $X$ on $Y$. Assuming that the relationships are linear, and applying a linear Gaussian structural causal model, one can write:
\begin{align}
X & = \alpha_0 + \alpha I_X + \delta U + \epsilon_X, \hspace{0.5cm} Y  = \beta_0 + \beta X + \gamma U + \epsilon_Y, 
\end{align} 
where $\epsilon_X$ and $\epsilon_Y$ are noise terms, independent of each other. It is assumed, without loss of generality that $U$, $\epsilon_X$, and $\epsilon_Y$ have mean zero. \textit{Note that the common cause $U$ can be absent}, and we are not going to assume that $U$ exists when modelling dependencies between $X$ and $Y$.
The instrumental variable $I_X$ is uncorrelated with ancestors of $X$ and $Y.$ The instrumental variable is a source of variation for $X$, and it only influences $Y$ through $X$. Studying how $X$ and $Y$ respond to perturbations of $I_X$ can help to deduce how $X$ influences $Y$. A two-stage least squares~\cite{Sawa12} can be used to solve the problem.

\textbf{Probability Distributions as Random Variables.} Similarly to~\cite{Janzing2010,Peters17}, we consider  probability distributions as random variables. $P(X)$ is a function of $X \in [0,1]$, and, thus, they are random variables distributed in $[0,1]$. Note that a model where a probability is randomly generated is an example of a hierarchical model, or of a model with priors, where some parameters are treated as random variables.

\section{Latent Instrumental Variables for Causal Discovery}
\label{sec:ourmethod}

In this section, we show that the methods based on the independence of cause and mechanism, introduced by~\cite{Scholkopf13,Parascandolo18,Daniusis10,Scholkopf12,SgouritsaCURE,Liu16}, also \textit{indirectly contain traces of the existence of the hidden instrumental variable}. It can be seen as follows. $P(X)$ generates $X$ in the approaches proposed and investigated by the scientists mentioned above. In our method, we assume that $X$ are generated by $I_X$. Therefore, there is a strong parallel between $P(X)$ and $I_X$ which are both \textit{priors for the observations}. So, our method also provides some intuition and interpretation of the recently proposed algorithms based on the independence between the ``cause and the mechanism''. We provide some theoretical results on the independence of the causal mechanisms in terms of probability distributions and information theory. These results allow us to derive a novel algorithm of causal inference which is presented in this section below.

Our observations are $X$ and $Y$, two one-dimensional vectors of the same length, and these variables are correlated. Here, we suppose that either causality between these variables exists, and either $X \rightarrow Y$, or $Y \rightarrow X$, or a common latent cause $X \leftarrow U \rightarrow Y$ can be identified, where $U$ is a hidden variable that can impact $X$ and/or $Y$. Let $I_X$ and $I_Y$ denote latent instrumental variables of $X$ and $Y$ respectively. In the current contribution, \textit{we do not observe the instrumental variables, we assume that they exist}, and can be approximated. \textit{We do not assume that $U$ exists, however, we show how its existence can be deduced,} if it is the case. 

\textbf{Assumption 1.}
In case of observational non-temporal data, if $I_X$ exists such that $I_X \rightarrow X$, and if $I_Y$ exists such that $I_Y \rightarrow Y$, and if the random variables $X$ and $Y$ are correlated, then we assume that it is impossible that both $I_X \CI Y | X$ and $I_Y \CI X | Y$ hold.

\begin{theo} 
Let $X$ and $Y$ be two correlated random variables, and they do not have any common cause. We assume that either $X$ causes $Y$ or visa versa. If there exists a random variable $I_X$ such that $I_X \rightarrow X$, and if  $I_X \CI Y | X$, then we are able to infer causality and decide that $X \rightarrow Y$.
\end{theo}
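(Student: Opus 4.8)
The plan is to prove the claim by elimination. Since $X$ and $Y$ are correlated and, by hypothesis, share no common cause, the true model is either $X\rightarrow Y$ or $Y\rightarrow X$; it therefore suffices to show that the existence of an $I_X$ with $I_X\rightarrow X$ and $I_X \CI Y\mid X$ is incompatible with $Y\rightarrow X$. First I would record what $I_X$ contributes to the graph: as an instrumental-type prior for $X$ (in the sense of Section~\ref{sec:prelim}), $I_X$ is a source of variation that reaches $Y$ only through $X$ and is uncorrelated with the ancestors of $X$ and of $Y$; by acyclicity this means that, whichever of the two orientations holds, the \emph{only} path connecting $I_X$ to $Y$ passes through $X$.

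Next I would run the two-case analysis and read off the implied conditional (in)dependence by d-separation. If $X\rightarrow Y$, the unique $I_X$--$Y$ path is the chain $I_X\rightarrow X\rightarrow Y$; conditioning on $X$ blocks it, so $I_X \CI Y\mid X$ holds, consistent with the hypothesis. If instead $Y\rightarrow X$, then $X$ is a common effect of $I_X$ and $Y$, i.e. the unique path is the collider $I_X\rightarrow X\leftarrow Y$: marginally $I_X \CI Y$, but conditioning on the collider $X$ activates this path, so $I_X \nCI Y\mid X$, contradicting the assumption. Hence $Y\rightarrow X$ is excluded and $X\rightarrow Y$. (If one also posits a prior $I_Y$ with $I_Y\rightarrow Y$, the same bookkeeping gives an alternative route: $Y\rightarrow X$ would force $I_Y \CI X\mid Y$ as well, so both independences would hold, directly contradicting Assumption~1.)

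The delicate step -- and the one I expect to require the real care -- is the strict non-independence $I_X \nCI Y\mid X$ in the collider case, since an adversarial choice of functional forms or parameters could make the dependence induced by conditioning on $X$ cancel exactly. I would close this gap with a faithfulness-style argument in the spirit of Assumption~1: because $X$ and $Y$ are genuinely correlated and $I_X\rightarrow X$ is a non-degenerate edge, the collider at $X$ produces a genuine dependence between $I_X$ and $Y$ given $X$, which is exactly the equivalence ``$I_X \CI Y\mid X$ iff $P(Y\mid X)$ is the mechanism shared across the mixture'' from Section~\ref{sec:prelim}. A last point to spell out is that the ``no common cause'' hypothesis is precisely what removes the back-door path $X\leftarrow U\rightarrow Y$, which would otherwise also open under conditioning on $X$ and break the clean dichotomy; with that path excluded, the two cases are exhaustive and the conclusion $X\rightarrow Y$ follows.
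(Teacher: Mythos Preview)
Your proposal is correct and follows essentially the same route as the paper: a two-case elimination on the orientation of the $X$--$Y$ edge, using d-separation to see that the chain $I_X\rightarrow X\rightarrow Y$ yields $I_X\CI Y\mid X$ while the collider $I_X\rightarrow X\leftarrow Y$ yields $I_X\nCI Y\mid X$. The paper's proof is terser and leaves the faithfulness step implicit; your explicit flagging of that step (and of the role of the ``no common cause'' hypothesis) is a welcome elaboration rather than a different argument.
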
 

\begin{proof} 
Several directed acyclic graphs (DAGs) may be Markov equivalent~\cite{PearlCausality,PC}. We assume that once an essential graph is found, the directed arcs of this graph are interpreted causally. Then, the only DAG which is interpreted causally, is the causal graph.

Under the assumption that $I_X \rightarrow X$, and if $I_X \CI Y | X$, then the only possible directed graph is $I_X \rightarrow X \rightarrow Y$ . In case where $I_X \nCI Y | X$, we get $I_X \rightarrow X \leftarrow Y$.

\end{proof}

\begin{theo} 
If the true causal structure is $I_X \rightarrow X \rightarrow Y$, and $X$ and $Y$ do not have any common cause, then $P(Y|X)$ does not contain any information about $P(X)$ and vice versa, however, $P(X|Y)$ and $P(Y)$ are not independent. 
\end{theo}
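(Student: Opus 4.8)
The plan is to reduce everything to the Markov factorisation of the postulated DAG together with the mixture-model reading of ``independence between distributions'' introduced in Section~\ref{sec:prelim}. First I would write down the causal factorisation induced by $I_X \to X \to Y$, namely $P(I_X,X,Y)=P(I_X)\,P(X\mid I_X)\,P(Y\mid X)$, and record the two consequences needed: (i) $I_X \CI Y \mid X$ (the chain $I_X\to X\to Y$ is blocked at the non-collider $X$, and there is no arc $I_X\to Y$), and (ii) the conditional $P(Y\mid X)$ is \emph{autonomous}, i.e. it is a fixed object that does not change when we change the distribution of the prior $I_X$. Concretely, varying $P(I_X)$ (equivalently, varying the mixing weight $\lambda$ of Section~\ref{sec:prelim}) sweeps out a whole family of marginals $P_\lambda(X)=\sum_{i}P(X\mid I_X=i)\,P_\lambda(I_X=i)$ while leaving $P(Y\mid X)$ untouched; by the criterion $\forall\lambda,\ P_\lambda(X)\CI P_\lambda(Y\mid X)\iff P_\lambda(Y\mid X)=P(Y\mid X)$, this is exactly the statement that $P(X)$ and $P(Y\mid X)$ share no information. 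Here I would invoke Assumption~1 / Theorem~1 only to the extent that they pin down the admissible DAG as $I_X\to X\to Y$ rather than $I_X\to X\leftarrow Y$, so that this factorisation is the correct one.

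Next I would make the ``no shared information'' claim precise on the algorithmic side, following \cite{Janzing2010,Peters17}: treating $P(X)$ and $P(Y\mid X)$ as random variables (as in the paragraph ``Probability Distributions as Random Variables''), the generative story is that $I_X$ produces $P(X)$ while an independent mechanism choice (the pair $f_Y,P_{N_Y}$ in the SCM reading) produces $P(Y\mid X)$; since these are independent inputs to the model, $I\big(P(X);P(Y\mid X)\big)=0$, and, in the Kolmogorov sense, the shortest description of the pair is (up to a constant) the sum of the two descriptions.

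For the second half — that $P(X\mid Y)$ and $P(Y)$ are \emph{not} independent — I would exhibit the coupling explicitly. By Bayes' rule $P(X\mid Y)=P(Y\mid X)\,P(X)/P(Y)$ with $P(Y)=\sum_X P(Y\mid X)\,P(X)$, so both $P(Y)$ and $P(X\mid Y)$ are deterministic functions of the \emph{same} pair $\big(P(X),P(Y\mid X)\big)$; in particular both of them move whenever the prior $P(X)$ moves. Translating back to the mixture picture, taking $I_X$ to select between $\cal P$ and $\cal Q$ gives $P(Y)\neq Q(Y)$ and $P(X\mid Y)\neq Q(X\mid Y)$ \emph{simultaneously}, so the anticausal ``marginal'' and ``mechanism'' co-vary with the latent label — precisely a failure of independence (equivalently, in the meta-model of Section~\ref{sec:prelim}, the induced prior $I_Y$ on $Y$ satisfies $I_Y\nCI X\mid Y$). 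This is the expected asymmetry: in the anticausal direction the ``mechanism'' $P(X\mid Y)$ inherits a dependence on the true cause's prior.

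The step I expect to be the main obstacle is the formalisation in the second paragraph: ``contains no information about'' has only the informal algorithmic-mutual-information meaning of \cite{Peters17}, with no fully rigorous definition available, so the cleanest honest route is to state the result relative to the mixture-model definition adopted in Section~\ref{sec:prelim} and to treat the Kolmogorov-complexity phrasing as heuristic. A secondary subtlety is ruling out degenerate mechanisms (e.g.\ $P(Y\mid X)$ not depending on $X$, or an invertible deterministic $f_Y$) under which $P(Y)$ could fail to ``see'' $P(X)$ or $P(X\mid Y)$ could collapse; these edge cases should be excluded by the standing hypothesis that $X$ and $Y$ are genuinely correlated together with a mild nondegeneracy assumption on the mechanism.
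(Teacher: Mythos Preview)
Your proposal is correct and follows essentially the same skeleton as the paper: the paper also starts from the chain structure to record $P(Y\mid I_X,X)=P(Y\mid X)$ (your Markov factorisation / autonomy argument) and then applies Bayes' rule, writing $P(X\mid Y)=P(Y\mid X)\,P(X\mid I_X)/P(Y)$, to read off that $P(Y)$ enters the anticausal ``mechanism''. Your version is more carefully tied to the mixture-model definition of Section~\ref{sec:prelim} and flags the degeneracy edge cases, whereas the paper's argument is the terse observation that $P(X)$ is absent from the first equation and $P(Y)$ is present in the second.
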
 

\begin{proof}
Assume that $I_X \CI Y | X$. Let us consider the relation between $P(Y|X)$ and $P(X)$. In the following, we treat $P(Y|X)$, $P(X|Y)$, $P(X)$, and $P(Y)$ as random variables. 
We can write:
\begin{align}
P(Y|I_X, X) = P(Y|X).
\label{eq:eq1}
\end{align}
Note that we do not have $P(X)$ in eq.~(\ref{eq:eq1}) when we express $P(Y|X)$ for $I_X \rightarrow X \rightarrow Y$. 
Let us consider the relation between $P(X|Y)$ and $P(Y)$ for the same graphical structure. We get:
\begin{align}
P(X|Y) = \frac{P(Y|X)P(X|I_X)}{P(Y)},
\label{eq:eq2}
\end{align}
where the form of the nominator is due to the fixed dependencies $I_X \CI Y | X$. From eq.~(\ref{eq:eq2}), we clearly see that $P(X|Y)$ is not independent from $P(Y)$ for this graphical structure. 
\end{proof}

\subsection{Construction of the Instrumental Variables}
\label{sec:instrumental}
\textbf{Assumption 2.} (Cluster assumption.) For $X$ and $Y$, the cluster assumption holds: data belonging to the same cluster share similar properties. 

In some tasks the instrumental variables (IV) are observed, and their application is straightforward. In a number of applications, they are not provided. Here, we discuss how they can be approximated, and we draft a procedure to estimate the instrumental variables. In our experiments, in Section~\ref{sec:exp} we apply the proposed method for the IV construction. Note that the identification and characterisation of latent variables is a challenge in itself. Our work is slightly similar to~\cite{Janzing11,Sgouritsa13} in that we apply clustering methods to create the latent variables. Taking into account that only $X$ and $Y$ are observed, the instrumental variables can be constructed using either $X$, $Y$, or both, and an optimal choice of the variables that influence an IV is related to a graphical structure we try to identify and to orient. So, for a structure $I_X \rightarrow X \rightarrow Y$, $I_X$ does not contain information about $Y$, and $I_X$ has to be constructed from $X$ only. On the contrary, in case of $X \rightarrow Y \leftarrow I_Y$, $I_Y$ is not independent from $X$, and $I_Y$ has to contain information about both $X$ and $Y$. 

We propose that the decision whether $I_X$ and $I_Y$ are to be constructed from one or two variables is taken by Algorithm~\ref{alg:Instruments}. The proposed algorithm constructs the instrumental variables separately from $X$, $Y$ ( $I_{X_X}$,  $I_{Y_Y}$), and from both ($I_{X_{XY}}$, $I_{Y_{YX}}$), and tests which instrumental variables are more relevant. We rely on clustering methods for the instrumental variables estimation, in our experiments, we apply the k-means clustering approach, however, other clustering approaches can be used. Algorithm~\ref{alg:approx} drafts the procedures to approximate the candidates for the IV. 

Algorithm~\ref{alg:Instruments} compares the distance (we considered the Euclidean distance in our experiments but another measure, e.g., the Kullback-Leibler can be used) between $I_{X_X}$ and $I_{X_{XY}}$, and between $I_{Y_Y}$ and $I_{Y_{YX}}$. 
The intuition behind this criterion is as follows. If $Y$ influences clustering of $X$ less, than $X$ impacts clustering of $Y$ (the condition $ if (\text{dist}(I_{X_X},I_{X_XY}) < \text{dist}(I_{Y_Y},I_{Y_YX}))$ in the algorithm), then we apply $I_X$ constructed from $X$ only, and $I_Y$ is constructed from $X$ and $Y$. And visa versa. An important remark is that this criterion has a lot in common with the causal discovery methods based on the \textit{Kolmogorov complexity} and the \textit{MDL}: to infer causality, our criterion choses a simpler model.

\subsection{A Symmetric Causal Inference Algorithm}
Once the instrumental variables are fixed, e.g., as it is proposed in the previous section, we apply a symmetric algorithm sketched as a decision tree on Figure~\ref{algo:decision} based on the conditional (in)dependence tests. It takes $I_X$, $I_Y$, $X$, and $Y$, and returns a causal direction. Precisely, if a conditional independence test tells that $Y \CI I_X | X$ is true, then $X \rightarrow Y$ inferred, otherwise we test whether $X \CI I_Y|Y$, and if it is true, then $Y$ causes $X$. The last case where $X$ and $Y$ are correlated but both $Y \CI I_X | X$ and $X \CI I_Y|Y$ are false, let us conclude that there is a common hidden cause $ Y \leftarrow U \rightarrow X$.

\section{Experiments}
\label{sec:exp}

In this section, we illustrate the predictive efficiency of the proposed method on both artificial and real data sets.

\begin{minipage}{0.45\linewidth}
\begin{algorithm}[H]
\caption{Construction of IV Candidates}
\label{alg:approx}
\begin{algorithmic}
\STATE \underline{$I_{X_X}$ (IV variable of $X$ from $X$) }
\STATE Fix a number of clusters $K$
\STATE Cluster $\{X_i\}_{i=1}^N$ into $K$ clusters
\FOR {$i = 1:N$}
\STATE $I_{i,X_{X}}$ is the center of the cluster where $X_i$ belongs
\ENDFOR

\medskip

\STATE \underline{$I_{X_{XY}}$ (IV variable of $X$ from $X$ and $Y$) }
\STATE Fix a number of clusters $K$
\STATE Cluster $\{X_i, Y_i\}_{i=1}^N$ into $K$ clusters
\FOR {$i = 1:N$}
\STATE $I_{i,X_{XY}}$ is the 1st coordinate (corresponding to $X$) of the clusters centres where $(X_i, Y_i)$ belongs
\ENDFOR

\medskip
\STATE \underline{$I_{Y_Y}$ (IV variable of $Y$ from $Y$)}
\STATE is constructed similarly to the IV variable of $X$ from~$X$

\medskip
\STATE \underline{$I_{Y_{YX}}$ (IV variable of $Y$ from $X$ and $Y$) }
\STATE is constructed similarly to the IV variable of $X$ from~$(X,Y)$
(Take the 2nd coordinate of the clusters centres)

\end{algorithmic}
\end{algorithm}
\end{minipage}
\hfill
\begin{minipage}{0.5\linewidth}
\begin{algorithm}[H]

\begin{algorithmic}
\smallskip

\STATE \textbf{Input:} Observations $X$ and $Y$, a clustering algorithm
\STATE \textbf{Output:} Instrumental variables $I_X$ and $I_Y$

\medskip

\STATE \textit{// Construct instrumental variables to be tested}
\STATE Construct IV of $X$, $I_{X_X}$ using $X$ only
\STATE Construct IV of $X$, $I_{X_{XY}}$ using $X$ and $Y$
\STATE Construct IV of $Y$,  $I_{Y_Y}$ using $Y$ only
\STATE Construct IV  of $Y$, $I_{Y_{YX}}$ using $X$ and $Y$

\medskip

\STATE \textit{// Take the decision which IV to use}

\medskip

\IF {$(\text{dist}(I_{X_X},I_{X_{XY}}) < \text{dist}(I_{Y_Y},I_{Y_{YX}}))$}

\STATE \textit{// the IV of $X$ is constructed from $X$ only}
\STATE $I_X  = I_{X_X}$
\STATE \textit{// the IV of $Y$ is constructed from both $X$ and~$Y$}
\STATE $I_Y  = I_{Y_{YX}}$
\smallskip
\ELSE
\STATE \textit{// the IV of $Y$ is constructed from  $Y$}
\STATE  $I_Y  = I_{Y_{Y}}$
\STATE \textit{// the IV of $X$ is constructed from $X$ and $Y$}
\STATE $I_X  = I_{X_{XY}}$
\ENDIF
\end{algorithmic}
\caption{Approximation of the Instrumental Variables (IV)  $I_X$ and $I_Y$ from $X$ and $Y$.}
\label{alg:Instruments}
\end{algorithm}
\end{minipage}

\begin{wrapfigure}{R}{0.45\textwidth}
\begin{tikzpicture}[scale=0.6,
    node/.style={%
      draw,
      rectangle,
    },
  ]

    \node [node] (A) {$Y \CI I_X|X$?};
    \path (A) ++(-135:\nodeDist) node [node] (B) {$X \rightarrow Y$};
    \path (A) ++(-45:\nodeDist) node [node] (C) {$X \CI I_Y | Y$?};
    \path (C) ++(-135:\nodeDist) node [node] (D) {$Y \rightarrow X$};
    \path (C) ++(-45:\nodeDist) node [node] (E) {$X \leftarrow U \rightarrow Y$};

    \draw (A) -- (B) node [left,pos=0.25] {yes}(A);
    \draw (A) -- (C) node [right,pos=0.25] {no}(A);
    \draw (C) -- (D) node [left,pos=0.25] {yes}(A);
    \draw (C) -- (E) node [right,pos=0.25] {no}(A);
\end{tikzpicture}
\caption{A Symmetric Causal Inference Algorithm}
\label{algo:decision}
\end{wrapfigure}
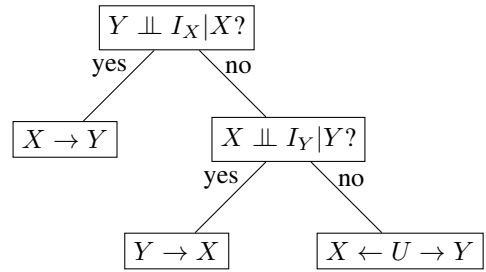

\subsection{Simulated Data}

We consider simple discrete and continuous scenarios. In the discrete case, we fix the structures and the probability distributions on the graphs, and generate binary variables. In the continuous case, we use a Gaussian distribution. We generate the instrumental variables $I_X$ and $I_Y$, $X$ and $Y$, and the hidden variable $U$. We use the \texttt{bnlearn} R package to construct the synthetic data sets, we also use the conditional independence tests from the same package. For our discrete setting with binary variables, we apply an asymptotic mutual information independence test \texttt{ci.test(test='mi')}, and for the continuous setting with Gaussian variables, the exact t-test for Pearson's correlation \texttt{ci.test(test='cor')}. Note that the above mentioned conditional independence tests from the \texttt{bnlearn} R package return ``big'' p-values if variables are conditionally independent, and the p-values are small (with an arbitrary threshold 0.05) for dependent variables. 

We consider and simulate discrete and continuous data for two following scenarios: 1) $X \rightarrow Y$, and $X \leftarrow U \rightarrow Y$. 
We test a various number of observations, from 10 to 10,000, and we observe that in the discrete case, even for such a simple problem as one with variables taking binary values, a big number of observations is needed to obtain a reasonable performance. 
Figure~\ref{fig:simulated1} illustrates the p-values of the conditional independence tests for the discrete (two left plots) and continuous (two right plots) settings. We show the results for both cases  $X \CI I_Y | Y$ and $Y \CI I_X | X$. We observe that for the ground truth $X \rightarrow Y$, $X \CI I_Y | Y$ asymptotically converges to small p-values (close to 0), and $Y \CI I_X | X$ returns big p-values even for big number of observations.
\begin{figure*}
\includegraphics[scale=0.2]{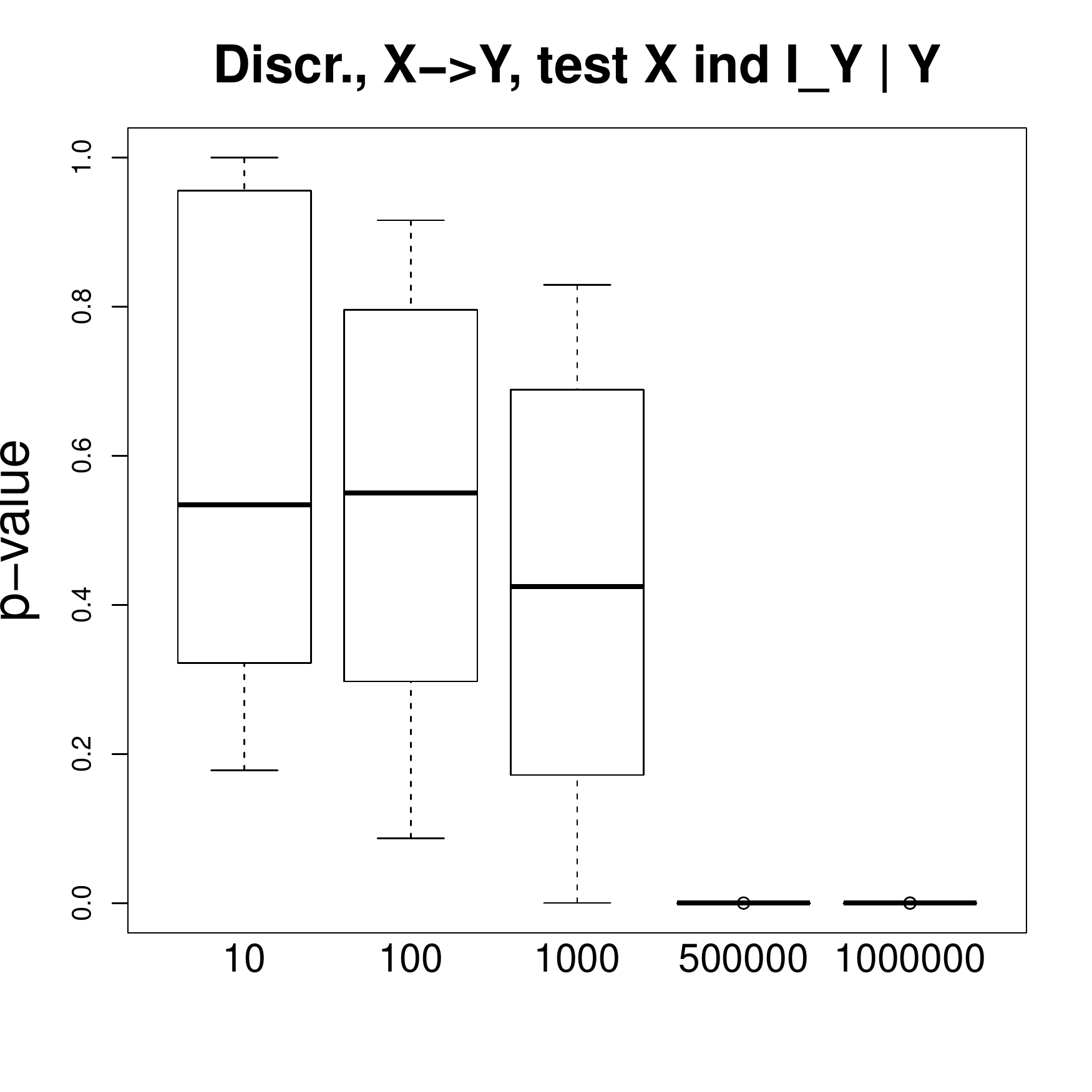}\includegraphics[scale=0.2]{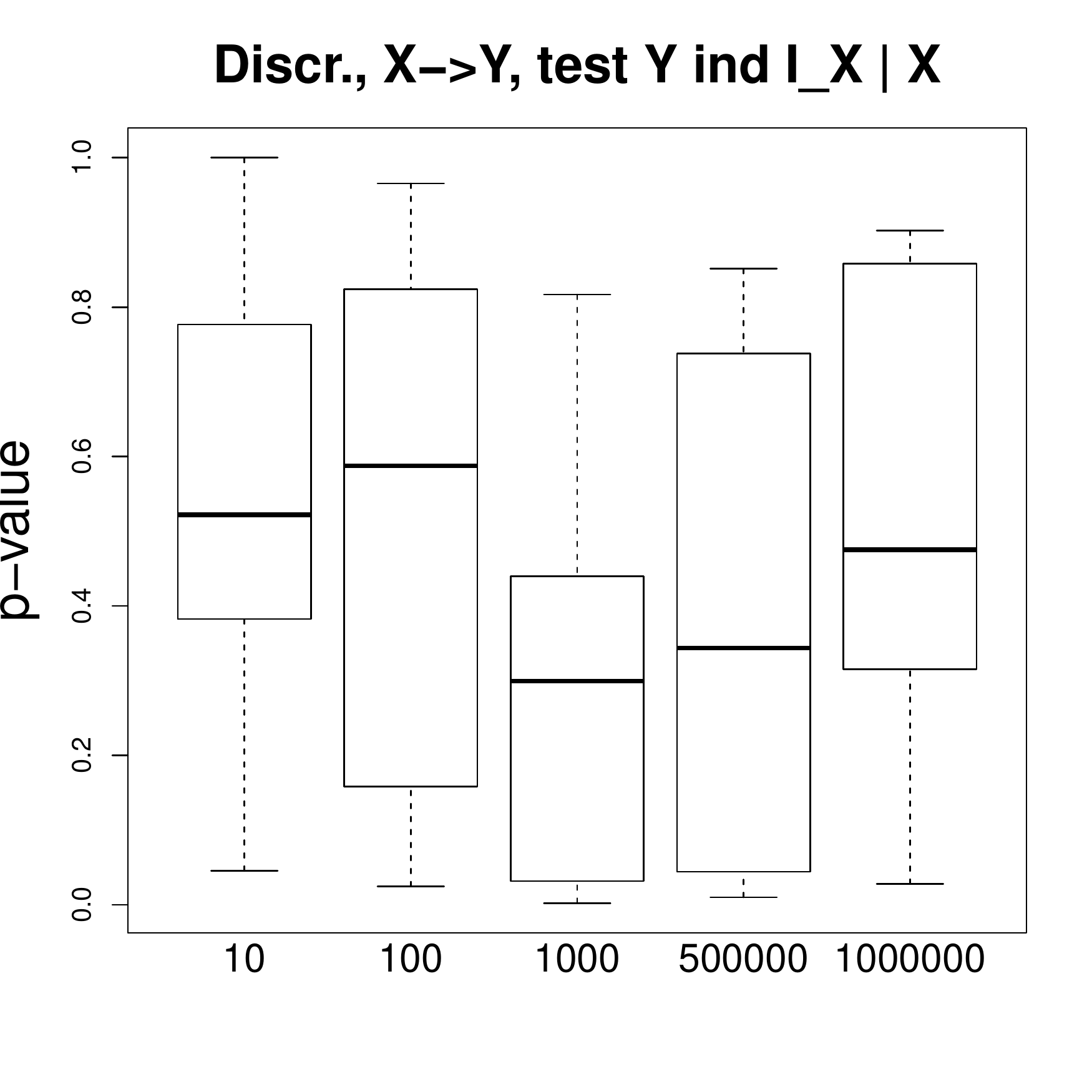}\includegraphics[scale=0.2]{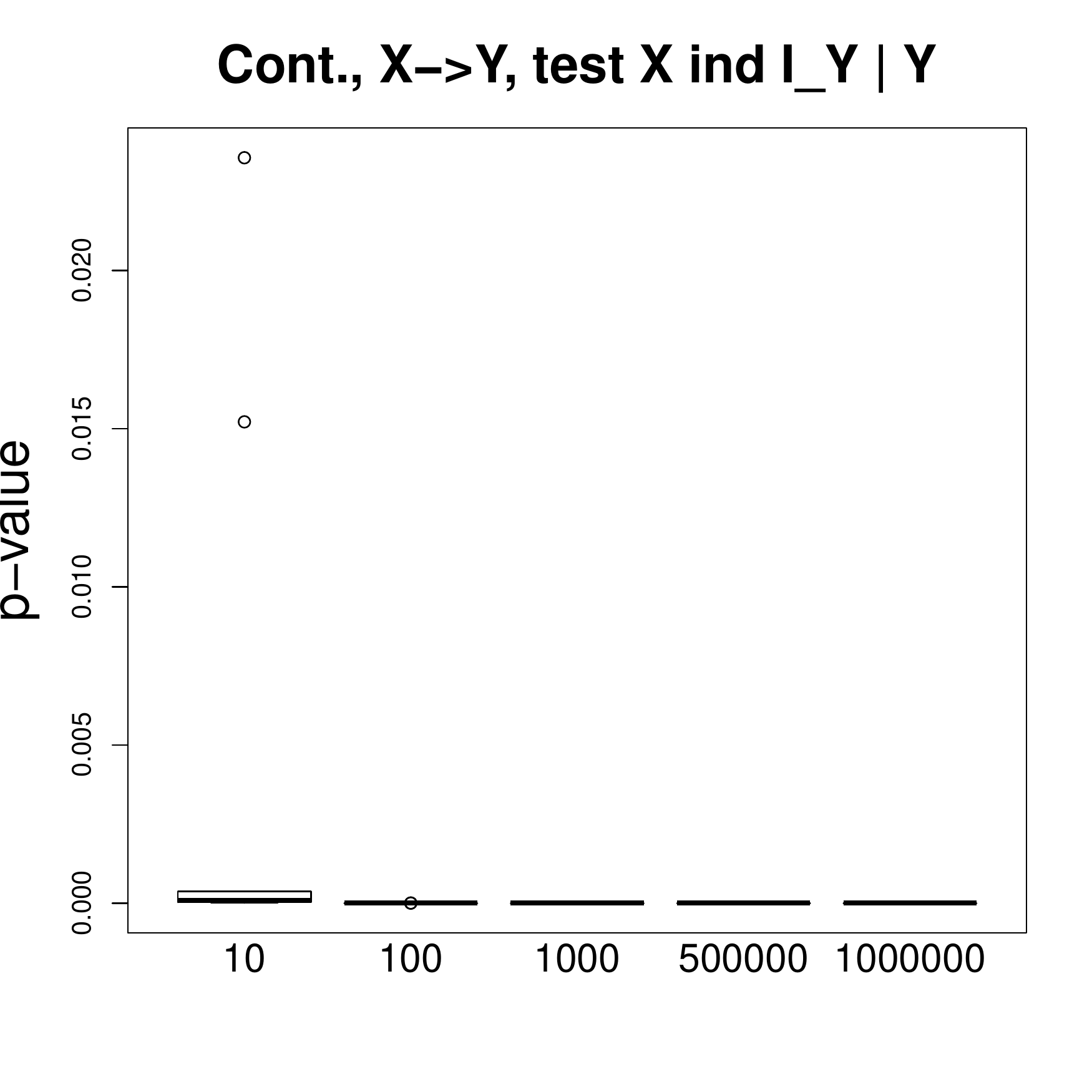}\includegraphics[scale=0.2]{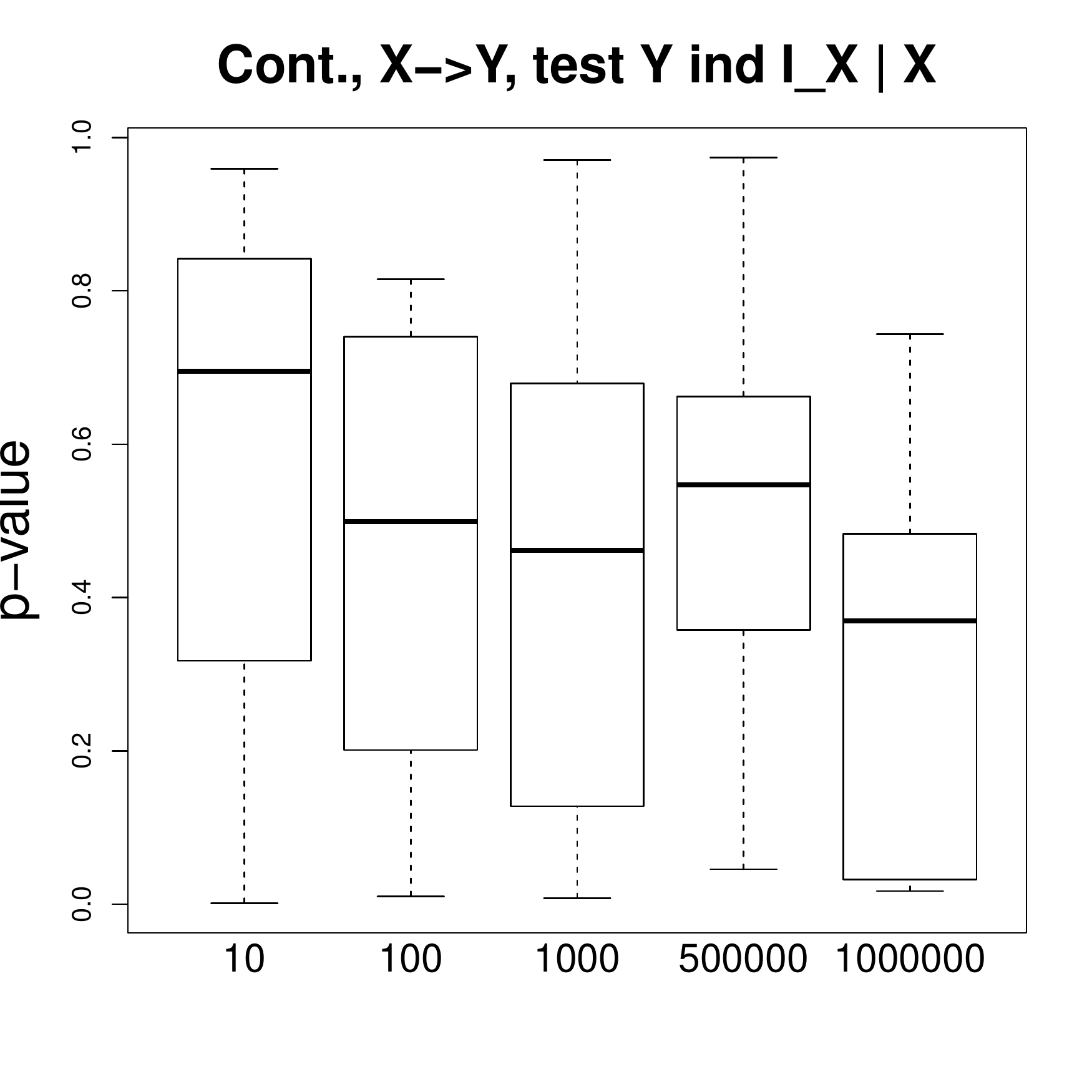}  
\caption{Simulated data. Ground truth: $X \rightarrow Y$. Two plots on the left: discrete data; two plots of the right: continuous data. The p-values of an asymptotic mutual information test (for the discrete case) and an exact t-test for Pearsons's correlation (the continuous case) as a function of the number of observations (x-axis).}
\label{fig:simulated1}
\end{figure*}
\begin{figure*}
\includegraphics[scale=0.2]{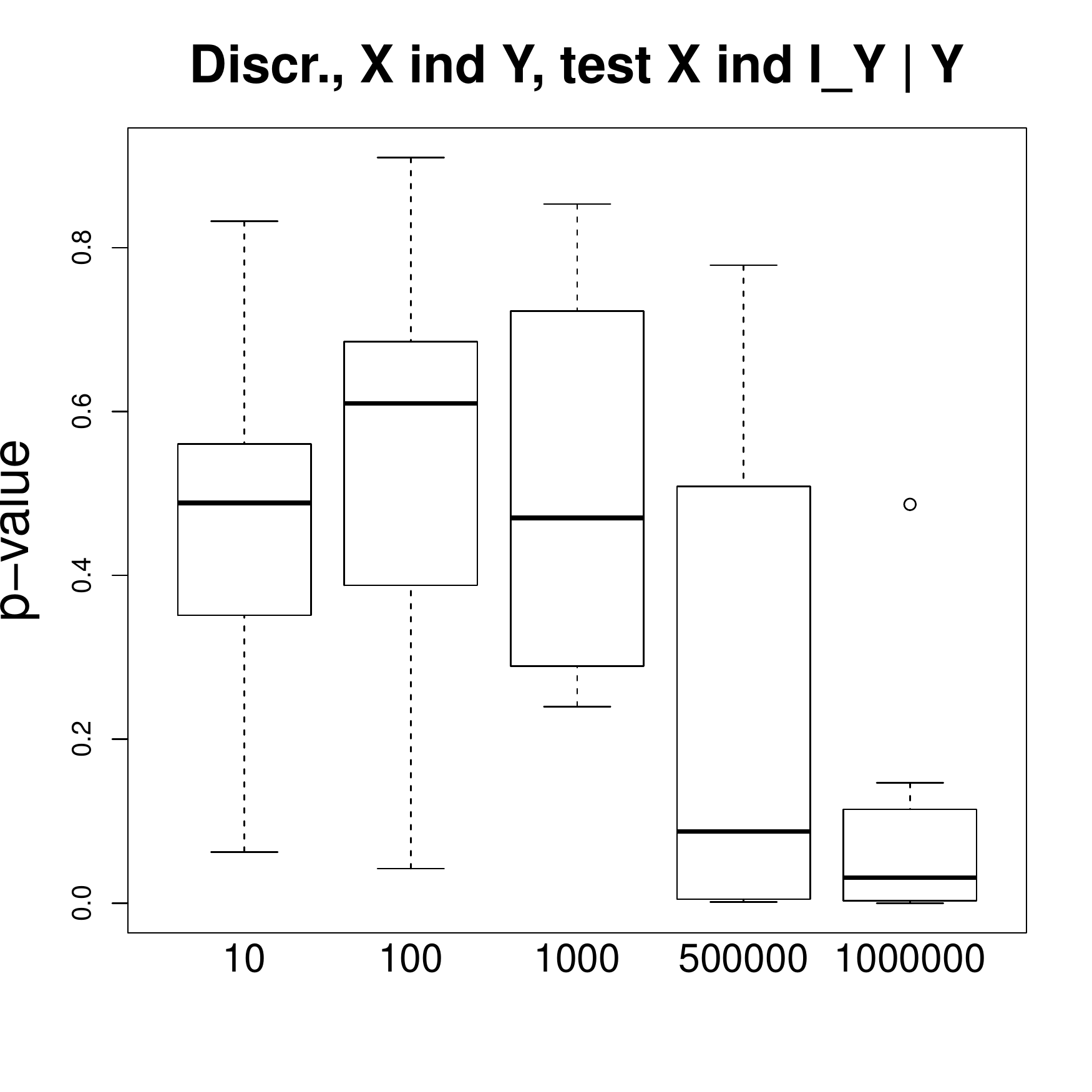}\includegraphics[scale=0.2]{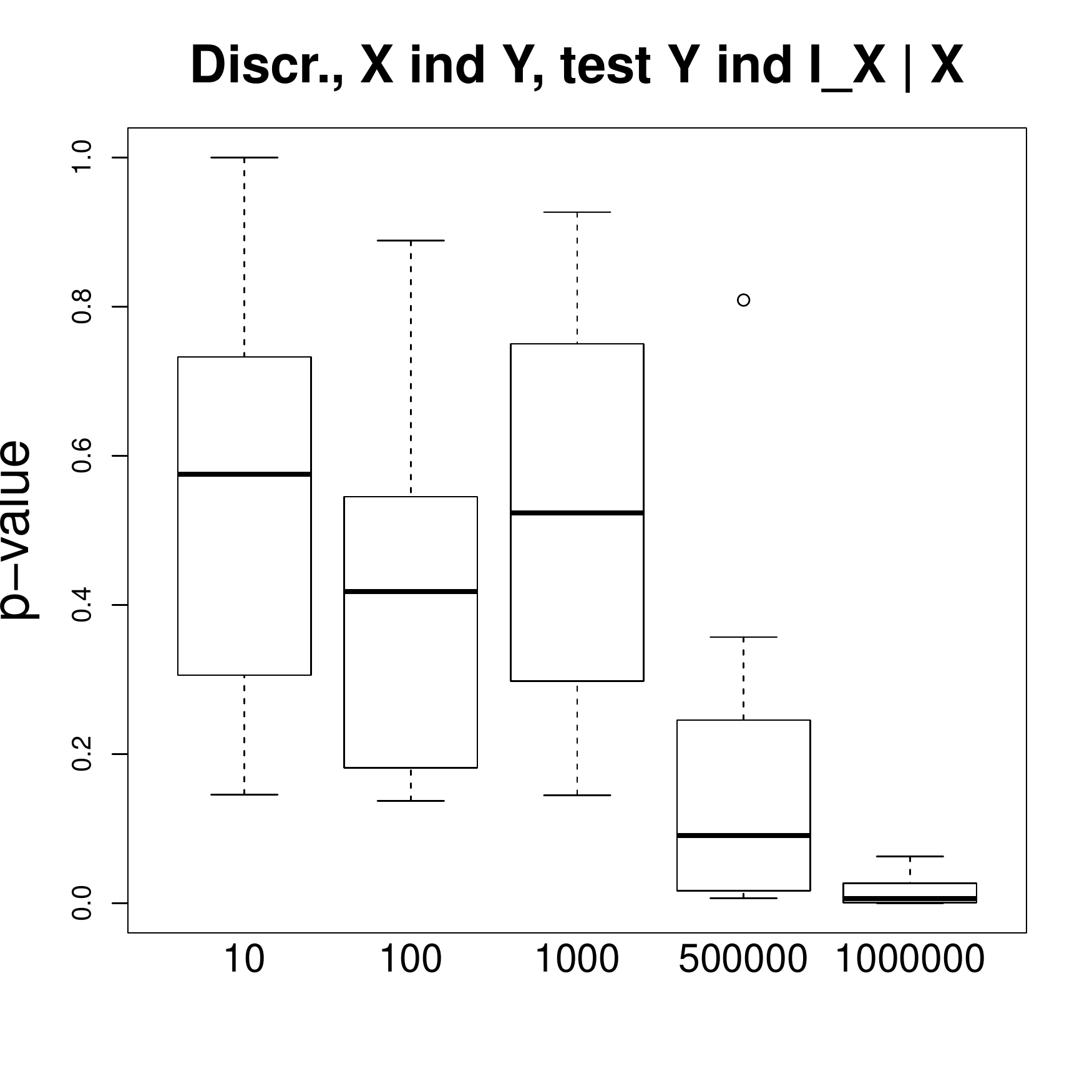}\includegraphics[scale=0.2]{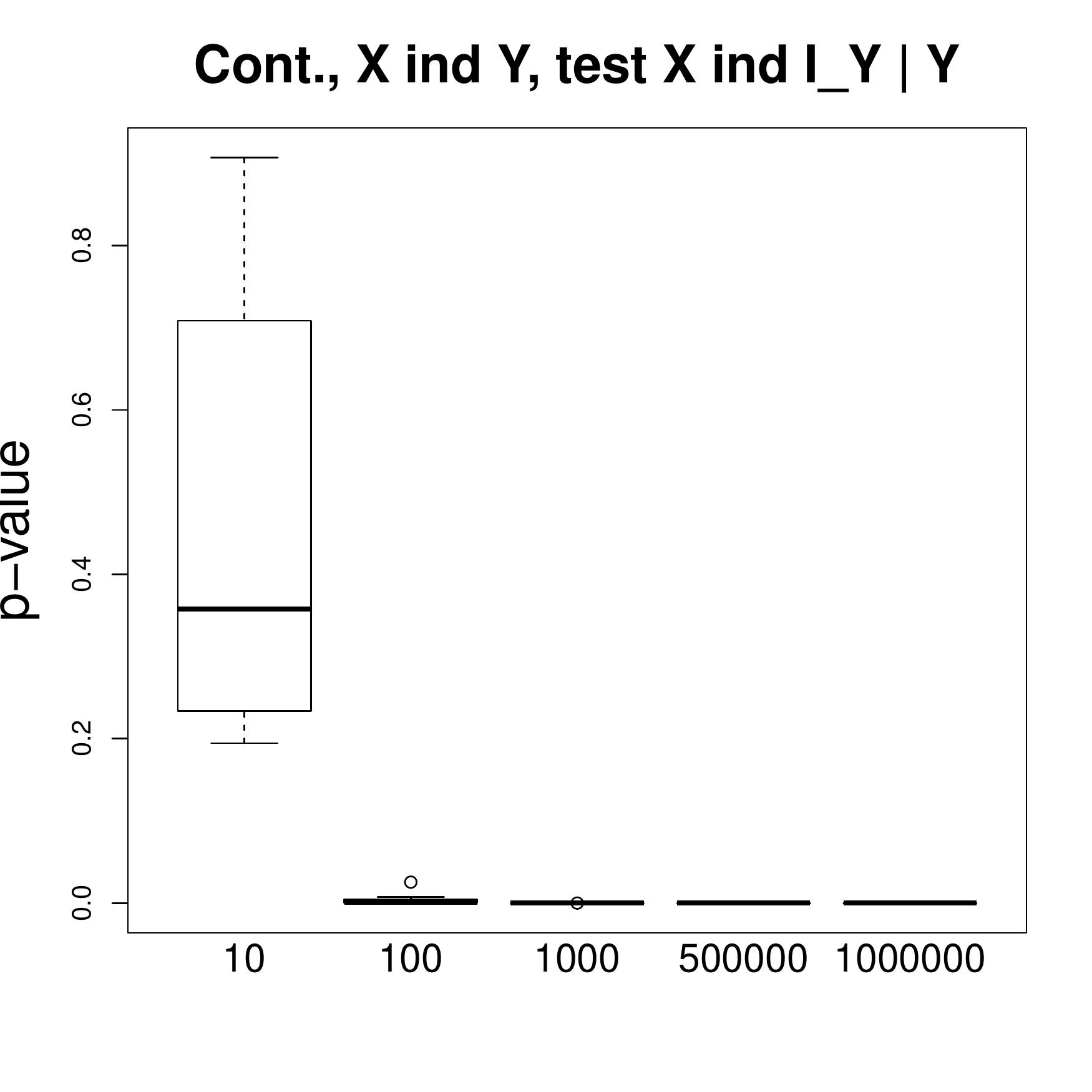}\includegraphics[scale=0.2]{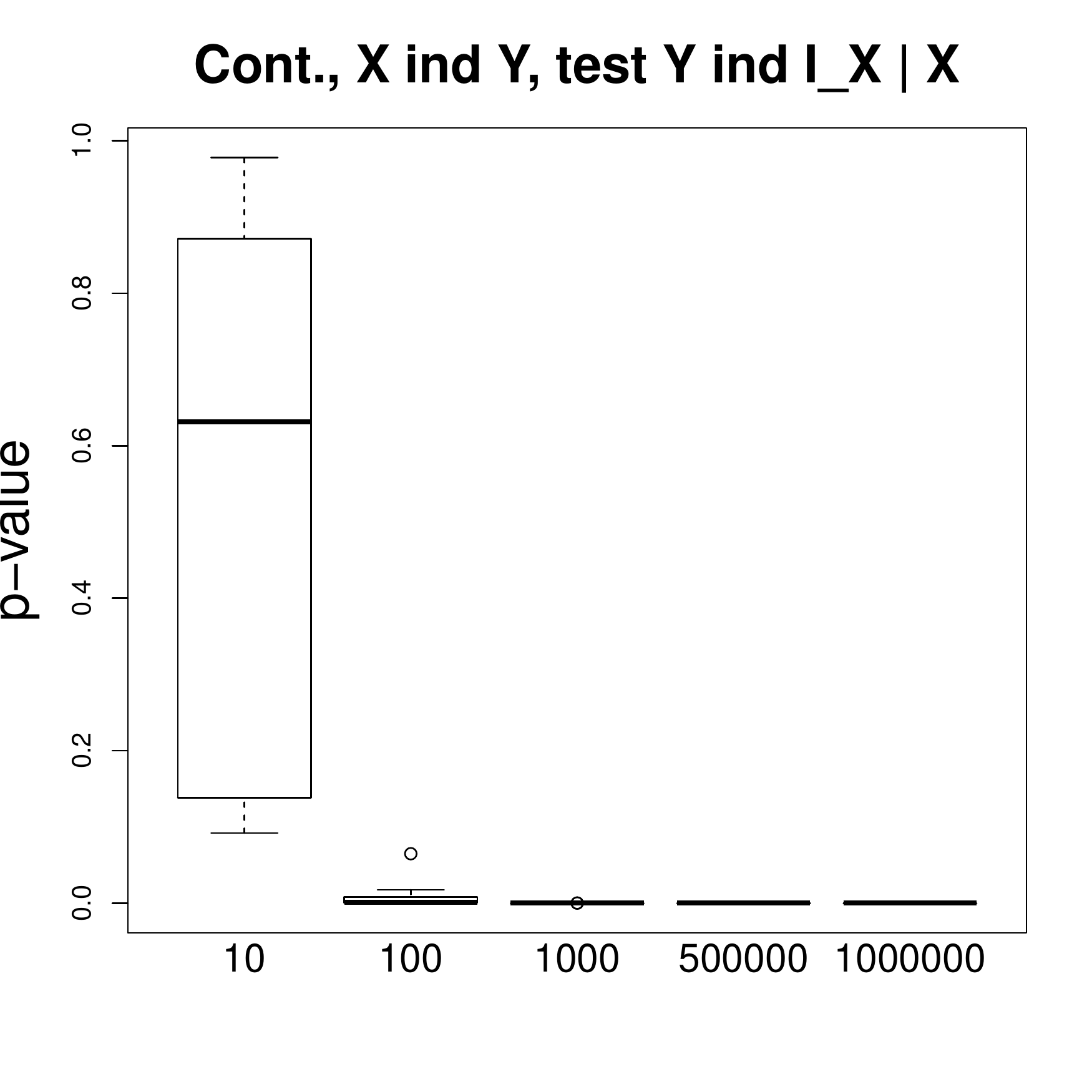}  
\caption{Simulated data. Ground truth: $X \CI Y | U$. On the left: two plots for the discrete setting; on the right: two plots for the continuous setting. The p-values as a function of the number of observations (x-axis). }
\label{fig:simulated2}
\end{figure*}
\begin{wrapfigure}{R}{0.5\textwidth}
\includegraphics[scale=0.2]{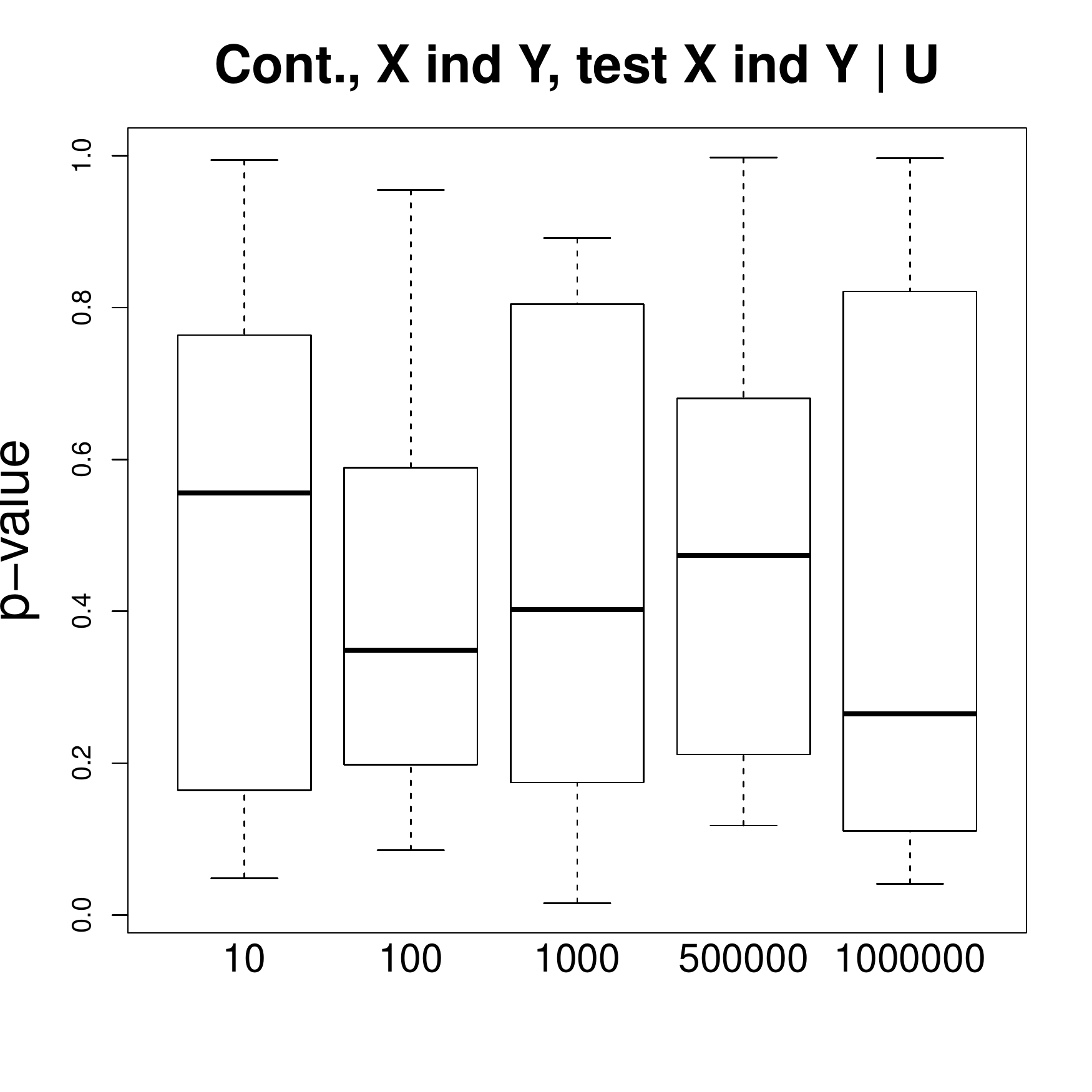}\includegraphics[scale=0.2]{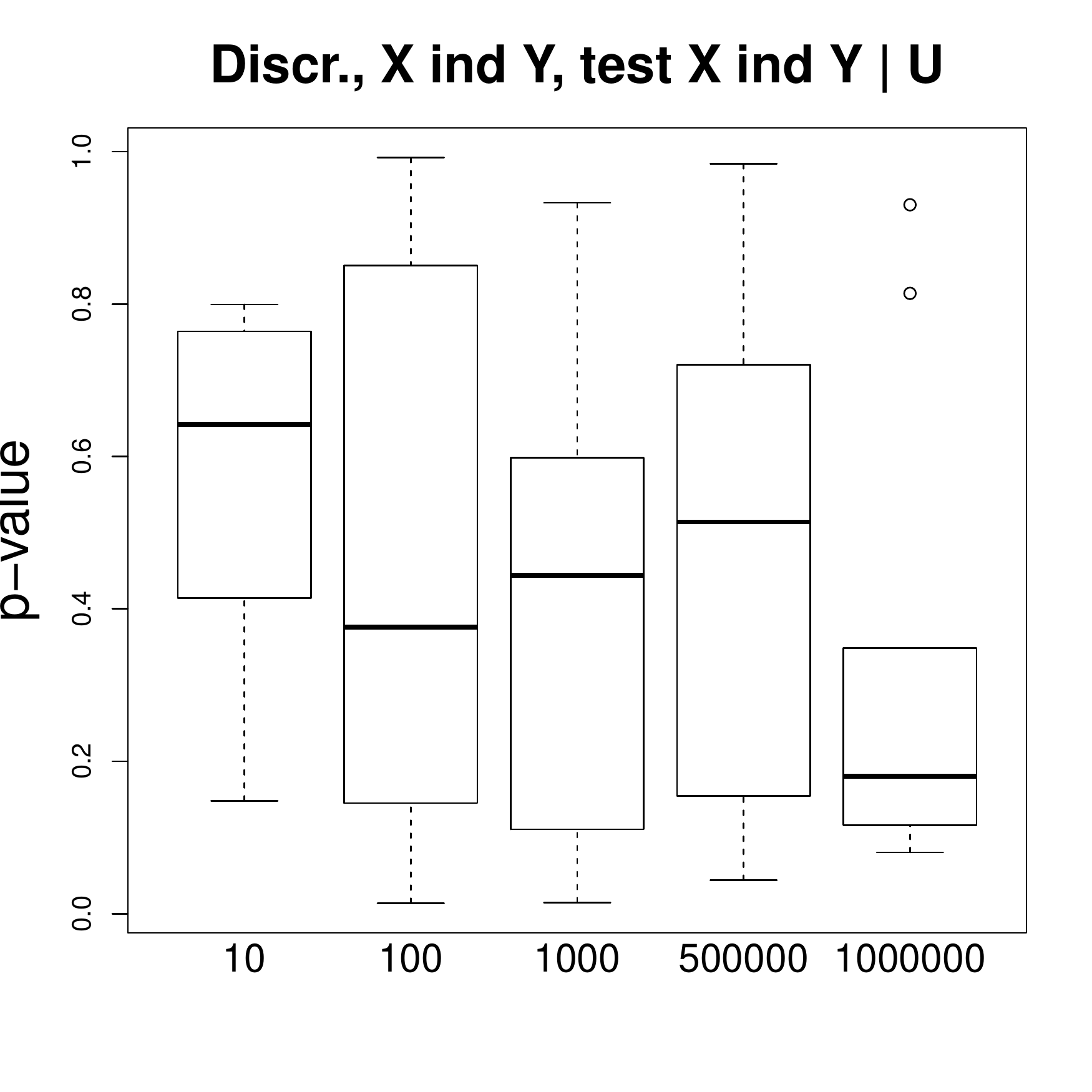} 
\caption{Simulated data. Ground truth: $X \CI Y | U$. The results of the conditional independence tests for $X \CI Y | U$ for continuous (on the left) and discrete (on the right) data. On the x-axis: the number of observations.}
\label{fig:simulated3}
\end{wrapfigure}

Figure~\ref{fig:simulated2} shows our results for the scenario $X \leftarrow U \rightarrow Y$. For the discrete and continuous experiments, we test whether $Y \CI I_X | X$  and whether $X \CI I_Y | Y$. We see that the variables are not independent. 
On Figure~\ref{fig:simulated3}, we demonstrate the p-values of the conditional independence test $Y \CI X | U$, what is more a sanity check, and we observe that in this case where the ground truth is $X \leftarrow U \rightarrow Y$, the p-values are far from 0 for both continuous and discrete scenarios.

\subsection{Cause-Effect Pairs}
We have tested the proposed algorithm on the benchmark collection of the cause-effect pairs, obtained from \url{http://webdav.tuebingen.mpg.de/cause-effect}, version 1.0.  The data set contains 100 pairs from different domains, and the ground truth is provided. The goal is to infer which variable is the cause and which is the effect. 

The pairs 52 -- 55, 70 -- 71,  and 81 -- 83 are excluded from the analysis, since they are multivariate problems. Note that each pair has an associated weight, provided with the data set,  since several cause-effect pairs can come from the same scientific problem, and in a number of publications reporting results on this data set, the accuracy is a weighted average. 
We apply the proposed method, described in Section~\ref{sec:ourmethod}, to infer causality on the benchmark data.  On Figure~\ref{fig:accPairs1}, we show the standard (unweighted) accuracy and the weighted accuracy where the weights for each observation pair are given in the data set. To increase the stability and accuracy, we also propose a scenario where we split the data into k-folds, carry out causal inference on each fold separately, and take an ensemble decision on the causal direction. The accuracy for such an ensemble approach is also shown on Figure~\ref{fig:accPairs1} for weighted accuracy using the ensemble method and the unweighted ensemble result. The number of folds in our experiments is 10. 
In our experiments, we show both the weighted and unweighted accuracy.  Speaking about the state-of-the-art results on the Cause-Effect pairs, it was reported that Origo~\cite{Budhathoki16} achieves 58\% accuracy, and the Additive Noise models (ANM)~\cite{Peters14} reach $72 \pm 6 \%$. Figure~\ref{fig:accPairs1} illustrates that the proposed method outperforms the state-of-the-art algorithms: the weighted accuracy which is compared to the above mentioned methods is 83.2\%. Note that the ensemble method reduced the variance significantly.

\begin{figure}
\centering
\includegraphics[scale=0.28]{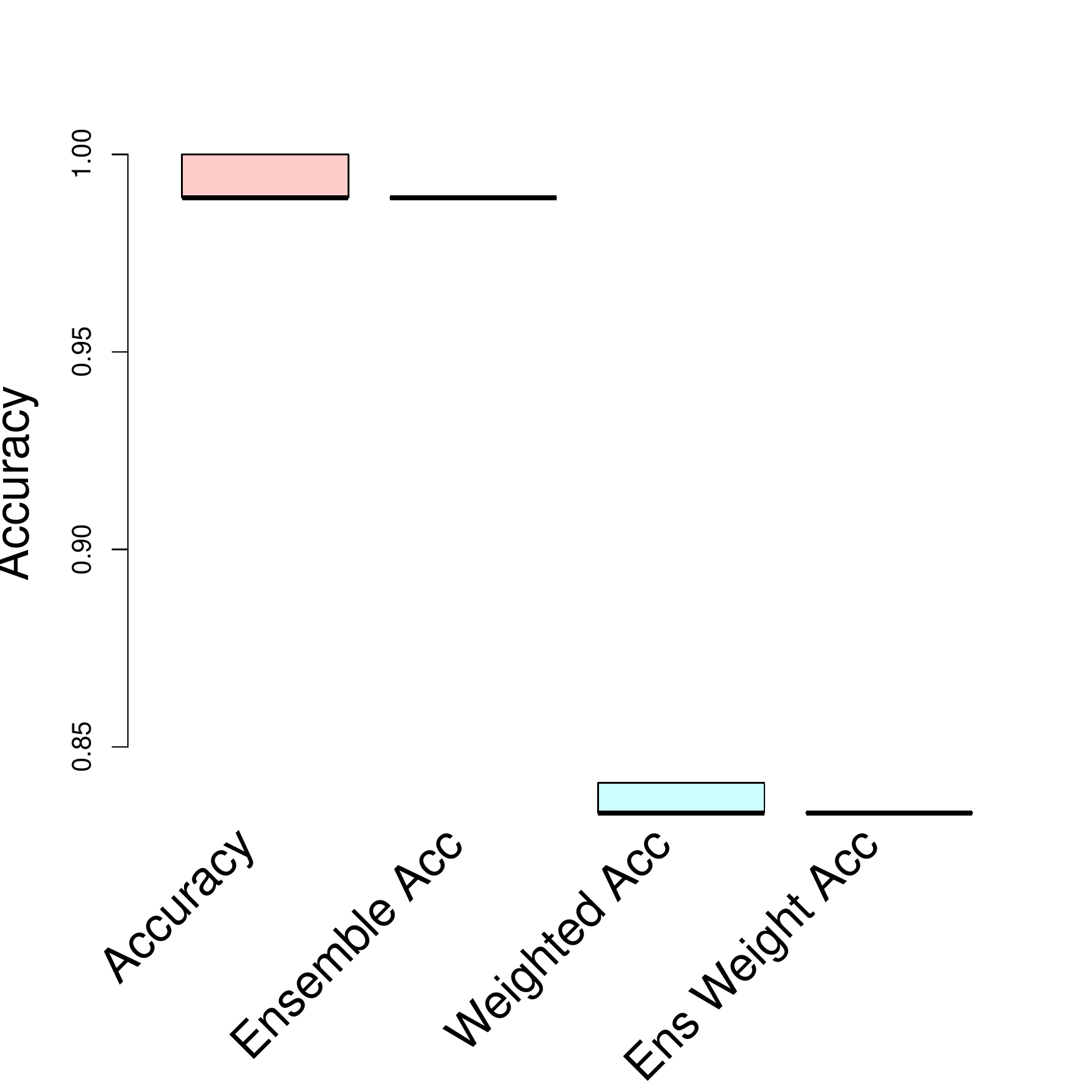}\includegraphics[scale=0.3]{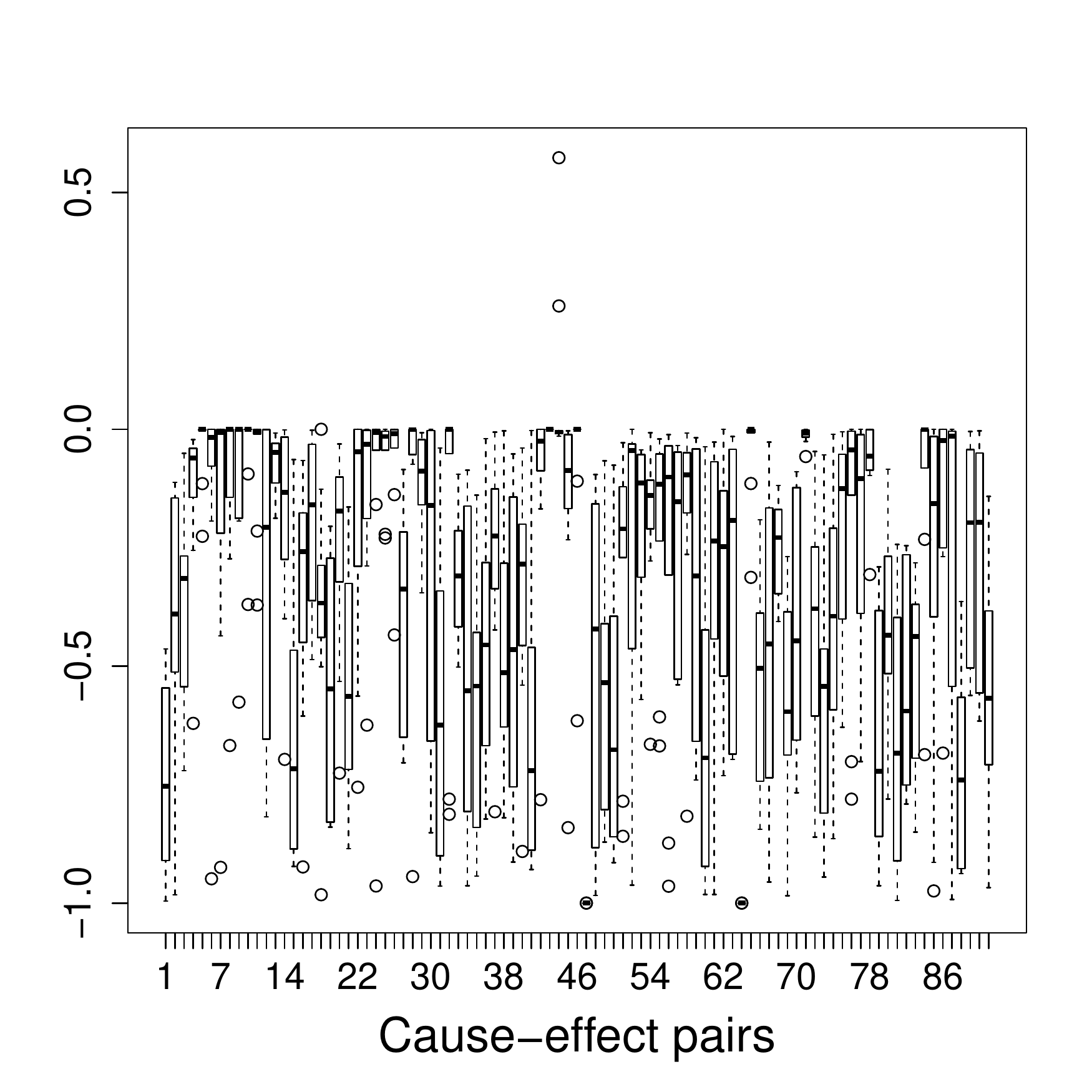}
\caption{On the left: accuracy on the Cause-Effect Benchmark. On the right: the difference between the test statistics $X \CI I_Y | Y$ and $Y \CI I_X | X$.}
\label{fig:accPairs1}
\end{figure}
Figure~\ref{fig:accPairs1} shows our results for the case where the number of clusters, i.e., modalities of the hidden instrumental variables is set to 15 both for $I_X$ and $I_Y$. We tested different numbers of clusters for the instrumental variables construction (see Section~\ref{sec:instrumental} for details). For the current task, we did not notice any important impact on the result, however, taking extremely small (2-3) and big (70 - 100) numbers of clusters degrades the performance.

What is central and what is interesting to look at, are the p-values of the conditional independence tests (here the exact t-test for Pearson's correlation from \texttt{bnlearn} R package) $X \CI I_Y | Y$ and $Y \CI I_X | X$. On Figure~\ref{fig:accPairs1} on the right we show their difference. If the p-values of the test $X \CI I_Y | Y$ are small (that is $X$ and $I_Y$ are not independent given $Y$) and the results of  $Y \CI I_X | X$ are relatively big (or bigger than ones of $X \CI I_Y | Y$) telling that $Y$ and $I_X$ are independent given $X$, then the plotted difference is negative. It is exactly what is observed for almost all cause-effect pairs.

\section{Conclusions}

We challenged to bring together two principle research avenues on causal inference, i.e. causal inference using conditional independence and methods based on the postulate of independence of cause and mechanism. The causal inference methods based on the independence of cause and mechanism, or, to be precise, between the prior and the mechanism, and that consider probability distributions as random variables, appear more and more often in the machine learning community.  In this contribution, we provide some theoretical foundations for this family of algorithms. Our main message is that the role of the hidden instrumental variables can not be neglected.

We propose an algorithm to estimate the latent instrumental variables efficiently. We also introduce a simple (and symmetric) algorithm to perform causal inference for the case of only two observed variables, where the corresponding instrumental variables are approximated. Our original approach is simple to implement, since it is based on a clustering algorithm (we used the k-means clustering, but any other clustering method can be tested) and on conditional independence tests. It can be applied to both discrete and continuous data, and we have shown that it is extremely competitive compared to the state-of-the-art methods on a real benchmark, where a cluster assumption holds.

Currently, we consider an extension of the proposed algorithm to more complex graphs, and potentially huge applications such as modelling gene interactions. Another avenue of research are novel metrics to measure the conditional independence of variables.

\bibliographystyle{plain}
\bibliography{causality}

\end{document}